\documentclass{article}
\usepackage[utf8]{inputenc}
\usepackage[T1]{fontenc}
\usepackage{authblk}
\usepackage{setspace}
\usepackage[margin=1in]{geometry}
\usepackage{graphicx}
\graphicspath{ {figs/} }
\usepackage{amsmath,amssymb,amsthm}
\usepackage{bm}
\usepackage{booktabs}
\usepackage[section]{placeins}
\usepackage{array}
\usepackage{makecell}
\usepackage[labelfont=bf,labelsep=period]{caption}
\usepackage{xcolor}
\usepackage{tikz}
\usetikzlibrary{arrows.meta,positioning,calc,shapes.geometric}
\usepackage[colorlinks=true,linkcolor=blue!50!black,citecolor=blue!50!black,urlcolor=blue!50!black]{hyperref}

\usepackage[numbers,square,sort&compress]{natbib}

\newtheorem{proposition}{Proposition}
\DeclareMathAlphabet\mathbfcal{OMS}{cmsy}{b}{n}
\newcommand{\ten}[1]{\mathbfcal{#1}}
\newcommand{\mat}[1]{\mathbf{#1}}
\newcommand{\Op}{\mathbf{Op}_m}

\title{Low-Rank Kolmogorov--Arnold Generator Flows for Interpretable Operator Learning}

\author[1*]{Minjong Cheon}
\affil[1]{Department of Computer Science, Korea National Open University, Seoul, South Korea.}
\affil[*]{Address correspondence to: jmj2316@mail.knou.ac.kr}

\date{}

\begin{document}

\maketitle

\begin{abstract}
Neural operators that learn physical laws from data should generalize beyond their training distribution and expose the learned law in closed form. The Kolmogorov--Arnold neural operator achieves both for position-dependent dynamics by learning a dense pseudo-differential symbol over all positions and frequencies, at a cost that grows with the square of the resolution and with closed forms that must be read out by hand. This dense symbol is unnecessary: a finite-order variable-coefficient differential operator has a symbol whose separation rank is bounded by the number of its coefficient functions, independently of the resolution, and the symbols that this network learns on its own benchmarks have rank two. We introduce the Kolmogorov--Arnold flow (KAFlow), which learns the generator of a physical operator as a low-rank symbol built from univariate Kolmogorov--Arnold functions, fixes the gauge of these functions so that every term has a unique place, integrates the dynamics with exact exponential integrators, and distills its closed form automatically. On the synthetic benchmarks of the Kolmogorov--Arnold neural operator, KAFlow lowers the out-of-distribution loss by 3.4 to 11 times and recovers every coefficient to within $10^{-4}$ without symbolic templates, reaching the precision floor of the data. On long-horizon quantum dynamics, it lowers the state infidelity by up to 6 orders of magnitude, and by 30 times when trained only on position and momentum measurements. KAFlow also learns a variable-order fractional operator 15 times more accurately than existing operator networks and discovers nonlocal, fractional and non-polynomial dispersion relations on which sparse regression fails.
\end{abstract}

\section{INTRODUCTION}
Data-driven modeling of physical dynamics asks for two properties at once: a learned model should generalize beyond its training distribution, and it should be interpretable, so that the learned law can be extracted, verified and reused~\cite{SciAI1,SciAI2}. Physical dynamics are formalized as operators through partial differential equations (PDEs)~\cite{pdephysics1}, and an important subclass consists of variable-coefficient PDEs~\cite{vcpde}, whose coefficients depend on position; an example is the Schr\"odinger equation with a position-dependent potential~\cite{sakurai}. Operator networks such as the Fourier neural operator (FNO)~\cite{fno,no} and DeepONet~\cite{deeponet} approximate such operators from data, but their learned representations are not closed-form laws. The Kolmogorov--Arnold neural operator (KANO)~\cite{kano} showed that position-dependent dynamics can be learned with robust out-of-distribution generalization and symbolic interpretability by parameterizing a pseudo-differential symbol $\boldsymbol p(\mat x,\bm\xi)$ jointly in position and frequency with Kolmogorov--Arnold networks (KANs)~\cite{kan,kan2}.

KANO evaluates its symbol on the full product grid of positions and frequencies and applies it through the Kohn--Nirenberg quantization~\cite{kohnNirenberg}, a double sum over positions and frequencies. The symbol is therefore represented as a dense $m\times m$ object, although the operators it targets are not dense in this sense: a variable-coefficient differential operator $\sum_k c_k(x)\partial_x^k$ has the symbol $\sum_k c_k(x)(i\xi)^k$, a sum of a few products of a function of $x$ and a function of $\xi$. Moreover, KANO learns a static map. Physical laws are generators of evolution, and the low-rank structure of a generator is lost in the propagator that a static map must represent. Finally, the closed form of a learned KANO is read out by a human who inspects the KAN edges and fixes symbolic templates for retraining~\cite{kano,kanguide}.

In parallel, equation discovery recovers governing PDEs by sparse regression over a library of candidate terms~\cite{sindy,pdefind,weaksindy}, and PDE-Net~\cite{pdenet,pdenet2} learns local differential stencils together with a symbolic network. These methods assume local, polynomial libraries, whereas many physical systems are governed by nonlocal or fractional operators: the Hilbert transform in Benjamin--Ono internal waves~\cite{benjamin,ono}, the fractional Laplacian in anomalous diffusion~\cite{fraclap} and the non-polynomial full-dispersion relation of Whitham's water-wave model~\cite{whitham}. KAN-based discovery methods~\cite{kanode,systemid1,sindykan,symbolickan,nonlocalkan} make learned functions inspectable but provide no operator architecture for this class.

Here we introduce the Kolmogorov--Arnold flow (KAFlow), an operator network that learns the generator of the target operator in a low-rank symbol space and integrates it exactly in time. The key insight is to represent each component of a physical operator where it is both sparse and low-rank: translation-invariant dispersion and dissipation as a spectral multiplier, position-dependent terms as a few separable products $a_r(\mat x)m_r(\bm\xi)$, and the pointwise nonlinearity as a gauge-fixed Kolmogorov--Arnold function. Our contributions are as follows:
\begin{itemize}
\item \textbf{Low-rank structure of position-dependent operators.} We show that finite-order variable-coefficient operators have resolution-independent separation rank, that the benchmark operators of KANO and the symbols KANO learns for them have rank two, and that a rank-$R$ symbol is applied exactly with $\mathcal O(Rm\log m)$ operations instead of $m^2$ symbol evaluations.
\item \textbf{An interpretable generator network.} KAFlow learns the generator in this low-rank symbol space with univariate Kolmogorov--Arnold functions, enforces identifiability by gauge fixing, integrates dynamics with exact exponential integrators and distills its closed form automatically, without human-designed templates.
\item \textbf{Validation on all KANO benchmarks and beyond.} KAFlow lowers the out-of-distribution loss of KANO by 3.4 to 11 times, recovers all coefficients to within $10^{-4}$, lowers the state infidelity of long-horizon quantum dynamics by up to 6 orders of magnitude, learns a variable-order fractional operator 15 times more accurately than KANO and FNO, and discovers nonlocal, fractional and non-polynomial dispersion relations on which sparse regression fails.
\end{itemize}

\section{MATERIALS AND METHODS}
\subsection{Experimental and technical design}
Figure~\ref{fig:1}A summarizes the design. KAFlow is trained on input--output pairs of an operator or on trajectories of an evolution equation, its closed form is distilled automatically, and both the network and the distilled closed form are evaluated on data outside the training distribution: unseen input families for static operators, extrapolated time steps for quantum dynamics and unseen initial conditions for nonlinear evolution equations. We use 4 groups of benchmarks. The first 2 are the complete benchmark suite of KANO~\cite{kano} with its official datasets: 3 position-dependent synthetic operators and 2 quantum Hamiltonians with 3 supervision types. The third is a variable-order fractional operator generated with the official KANO input generator. The fourth consists of 5 nonlinear evolution equations whose linear parts are local, fractional, nonlocal or non-polynomial. Baselines are the operator networks reported by KANO (FNO, U-FNO~\cite{ufno}, AM-FNO~\cite{amfno}, the pseudo-differential neural operator PDNO~\cite{pdno}, KANO and its variants) and, for evolution equations, a tuned FNO and 2 sparse-regression methods~\cite{pdefind,weaksindy}.

\begin{figure}[tbp]
\centering
\resizebox{\textwidth}{!}{%
\begin{tikzpicture}[font=\sffamily\small, >=Latex, node distance=5mm,
  st/.style={draw, rounded corners=2pt, fill=gray!8, align=center, minimum height=9mm, inner sep=3pt, text width=2.55cm},
  box/.style={draw, rounded corners=2pt, fill=orange!22, minimum height=7mm, inner sep=3pt},
  kbox/.style={draw, rounded corners=2pt, fill=blue!10, minimum height=7mm, inner sep=3pt},
  op/.style={draw, circle, inner sep=1pt, fill=white}]
\node[font=\sffamily\bfseries\large] at (-1.2,1.3) {A};
\node[st] (d) at (0,0) {Training data\\{\scriptsize pairs or trajectories}};
\node[st, right=6mm of d] (k) {KAFlow generator\\{\scriptsize low-rank symbol, Eq.~\ref{eq:kaflow}}};
\node[st, right=6mm of k] (t) {Training\\{\scriptsize restarts, Adam, L-BFGS}};
\node[st, right=6mm of t] (s) {Automatic distillation\\{\scriptsize sparse support, refit}};
\node[st, right=6mm of s] (c) {Closed-form operator};
\node[st, right=6mm of c, fill=blue!6] (e) {Out-of-distribution evaluation\\{\scriptsize unseen inputs, long horizons}};
\draw[->] (d) -- (k); \draw[->] (k) -- (t); \draw[->] (t) -- (s); \draw[->] (s) -- (c); \draw[->] (c) -- (e);
\draw[->] (t.south) -- ++(0,-0.35) -| (e.south);
\begin{scope}[yshift=-3.2cm]
\node[font=\sffamily\bfseries\large] at (-1.2,1.1) {B};
\node[draw, circle, minimum size=8mm] (a1) at (-0.3,0) {$\mat a$};
\node[kbox, right=6mm of a1, yshift=7mm, align=center] (p1) {$\boldsymbol p_\theta(\mat x,\bm\xi)$\\[-1pt]{\scriptsize on $m\times m$ grid}};
\node[kbox, right=6mm of p1, align=center] (kn) {$\Op(\boldsymbol p)\mat a$\\[-1pt]{\scriptsize double sum}};
\node[kbox, right=6mm of kn, yshift=-7mm] (phi1) {$\bm\Phi_\theta$};
\node[draw, ellipse, right=5mm of phi1] (o1) {$\ten L_{\text{KANO}}(\mat a)$};
\draw[->] (a1) |- (p1); \draw[->] (p1) -- (kn); \draw[->] (kn) -| (phi1);
\draw[->] (a1) -- ++(0,-0.9) -| (phi1); \draw[->] (phi1) -- (o1);
\end{scope}
\begin{scope}[xshift=11.2cm, yshift=-3.2cm]
\node[font=\sffamily\bfseries\large] at (-0.8,1.1) {C};
\node[draw, circle, minimum size=8mm] (a2) at (0,0) {$\mat u$};
\node[box, right=5mm of a2] (F) {$\ten F_m$};
\node[box, right=6mm of F, yshift=11mm] (l) {$\ell_\theta(\bm\xi)$};
\node[box, right=6mm of F] (m) {$m_{\theta,r}(\bm\xi)$};
\node[box, right=6mm of F, yshift=-11mm] (mu) {$\mu_\theta(\bm\xi)$};
\node[box, right=5mm of m] (ar) {$\times\,a_{\theta,r}(\mat x)$};
\node[box, right=5mm of mu] (ph) {$\bm\Phi^{\text{nl}}_\theta(\mat u,\cdot,\mat x)$};
\node[op, right=4mm of ar, xshift=6mm] (plus) {$+$};
\node[draw, ellipse, right=4mm of plus, align=center] (G) {$\ten G_\theta\mat u$\\[-1pt]{\scriptsize $=\partial_t\mat u$}};
\draw[->] (a2) -- (F); \draw[->] (F) |- (l); \draw[->] (F) -- (m); \draw[->] (F) |- (mu);
\draw[->] (m) -- (ar); \draw[->] (mu) -- (ph); \draw[->] (l) -| (plus); \draw[->] (ar) -- (plus); \draw[->] (ph) -| (plus); \draw[->] (plus) -- (G);
\node[font=\scriptsize, align=center] at ($(G)+(0.45,-1.3)$) {exact $e^{\Delta t\,\ell}$ (Lawson)\\ or Strang split-step};
\end{scope}
\end{tikzpicture}}
\caption{\textbf{Study design and architecture.} (A) KAFlow is trained on operator data, its closed form is distilled automatically, and both are evaluated outside the training distribution. (B) A KANO layer evaluates a dense symbol on the full position--frequency grid. (C) The KAFlow generator: a spectral multiplier $\ell$, $R$ separable position-dependent terms $a_r(\mat x)m_r(\bm\xi)$ and a gauge-fixed nonlinearity $\bm\Phi^{\text{nl}}$, all built from univariate functions and evaluated with fast Fourier transforms; for dynamics the generator is integrated exactly in time.}
\label{fig:1}
\end{figure}

\subsection{Related work}
\textit{Operator learning.} DeepONet~\cite{deeponet} learns the encoding and the latent map of an operator with 2 networks, and FNO~\cite{fno,no} fixes the encoding to a truncated Fourier transform with spectrally diagonal kernels; variants add local or multiscale kernels~\cite{ufno,amfno}. KAN-based operator networks replace multilayer perceptrons by KANs~\cite{deepokan,efkan}. KANO~\cite{kano} learns a dual-domain pseudo-differential symbol and recovers closed forms from its KAN edges.

\textit{Equation discovery.} Sparse identification of nonlinear dynamics (SINDy) and its PDE variants select terms from a library by sparse regression~\cite{sindy,pdefind,pysindy}, and weak formulations improve robustness~\cite{weaksindy}. PDE-Net~\cite{pdenet,pdenet2} learns moment-constrained convolutional stencils and a symbolic network. KAN-based approaches identify dynamical systems~\cite{kanode,systemid1,sindykan}, discrete symbolic structures~\cite{symbolickan} and nonlocal kernels of integro-differential equations in real space~\cite{nonlocalkan}.

\subsection{Pseudo-differential operators and the Kohn--Nirenberg quantization}
Operator learning approximates a mapping between function spaces, $\ten G:\mathcal A\to\mathcal U$, from pairs $\{(\mat a_i,\ten G(\mat a_i))\}_{i=1}^N$. On a periodic domain $D=(-\frac L2,\frac L2)^d$ with a uniform grid $\mathcal Y$ of spacing $h$ and retained modes $\Xi$, a pseudo-differential operator with symbol $\boldsymbol p(\mat x,\bm\xi)$ is discretized by the Kohn--Nirenberg quantization~\cite{kohnNirenberg,hormanderPDO}
\begin{equation}\label{eq:kn}
\Op(\boldsymbol p)\,\mat a(\mat x)\;=\;\Bigl(\frac hL\Bigr)^d\sum_{\bm\xi\in\Xi}\sum_{\mat y\in\mathcal Y} e^{i(\mat x-\mat y)\cdot\bm\xi}\,\boldsymbol p(\mat x,\bm\xi)\,\mat a(\mat y).
\end{equation}
A spatial multiplier $x$ and a spectral multiplier $i\xi$ are both symbols, so $\Op$ represents variable-coefficient differential operators exactly; for example, $\boldsymbol p(x,\xi)=x^2+\xi^2$ for $x^2-\partial_{xx}$. KANO stacks layers
\begin{equation}\label{eq:kano}
\ten L_{\text{KANO}}(\mat a)(\mat x)=\bm\Phi_{\theta_1}\bigl(\Op(\boldsymbol p_{\theta_2})\,\mat a(\mat x),\;\mat a(\mat x)\bigr),
\end{equation}
where the symbol $\boldsymbol p_{\theta_2}$ and the activation $\bm\Phi_{\theta_1}$ are compact KANs, whose edges are learnable univariate B-spline functions~\cite{kan,kan2}. The symbol is evaluated at every pair $(\mat x,\bm\xi)\in\mathcal Y\times\Xi$, and the reference implementation forms the phase tensor $e^{i(\mat x-\mat y)\cdot\bm\xi}$ for all triples. PDNO~\cite{pdno} first used this framework in operator learning: it assumes a single separable symbol $\boldsymbol p_{\mat x}(\mat x)\cdot\boldsymbol p_{\bm\xi}(\bm\xi)$ with multilayer-perceptron factors, acting on lifted latent channels between the lifting and projection networks of a neural operator~\cite{no}. The notation is summarized in Table~S1.

\subsection{Low-rank structure of position-dependent operators}
We call $\boldsymbol p$ of separation rank $R$ if $R$ is the smallest number with $\boldsymbol p(\mat x,\bm\xi)=\sum_{r=1}^R a_r(\mat x)m_r(\bm\xi)$; on the grid, it is the rank of the matrix $[\boldsymbol p(\mat x_j,\bm\xi_k)]_{jk}$.

\begin{proposition}[Resolution-independent rank]\label{prop:rank}
Let $\ten G=\sum_{k=0}^K c_k(\mat x)\,\partial^{\alpha_k}$ be a finite-order linear operator with $K+1$ coefficient functions, and let $\ten M(\bm\xi)$ be any translation-invariant, possibly nonlocal or fractional, multiplier. The symbol of $\ten M+\ten G$ has separation rank at most $K+2$, and its $m\times m$ grid matrix has rank at most $K+2$ for every resolution $m$.
\end{proposition}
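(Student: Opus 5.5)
The plan is to write down the Kohn--Nirenberg symbol of $\ten M+\ten G$ explicitly. Then we exhibit a separated representation with $K+2$ terms. The rank bounds follow because separation rank is the minimal number of terms, and the grid matrix is a sum of rank-one outer products.

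\textbf{Step 1: the symbol.} For the quantization \eqref{eq:kn}, a plane wave $e^{i\mat y\cdot\bm\xi}$ is mapped by $\Op(\boldsymbol p)$ to $\boldsymbol p(\mat x,\bm\xi)e^{i\mat x\cdot\bm\xi}$. So the symbol of an operator is read off from its action on Fourier modes. Applying $c_k(\mat x)\partial^{\alpha_k}$ to $e^{i\mat x\cdot\bm\xi}$ gives $c_k(\mat x)(i\bm\xi)^{\alpha_k}e^{i\mat x\cdot\bm\xi}$. A translation-invariant multiplier acts by $\ten M(\bm\xi)e^{i\mat x\cdot\bm\xi}$, whether it is local, nonlocal or fractional, since it is diagonal in Fourier space. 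By linearity the symbol is
\[
\boldsymbol p(\mat x,\bm\xi)=1\cdot\ten M(\bm\xi)+\sum_{k=0}^K c_k(\mat x)\,(i\bm\xi)^{\alpha_k}.
\]

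\textbf{Step 2: the separated form.} Set $a_0(\mat x)=1$, $m_0(\bm\xi)=\ten M(\bm\xi)$, and for each $k$ set $a_{k+1}=c_k$, $m_{k+1}(\bm\xi)=(i\bm\xi)^{\alpha_k}$. This is a representation of the form $\sum_r a_r(\mat x)m_r(\bm\xi)$ with $K+2$ terms, so by definition the separation rank is at most $K+2$. If $c_0$ is constant or some multi-indices coincide, terms merge and the rank is smaller; we only need an upper bound.

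\textbf{Step 3: the grid matrix.} Restricting to the grid gives
\[
[\boldsymbol p(\mat x_j,\bm\xi_k)]_{jk}=\sum_{r=0}^{K+1}\mathbf a_r\mathbf m_r^{\top},
\]
where $\mathbf a_r=(a_r(\mat x_j))_j$ and $\mathbf m_r=(m_r(\bm\xi_k))_k$. A sum of $K+2$ outer products has rank at most $K+2$ by subadditivity of rank. The bound depends only on the operator, not on $m$, which is the resolution independence.

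The only point needing care is Step 1. We must confirm that the discrete quantization \eqref{eq:kn} reproduces the continuous symbol on retained modes. This holds because the sum over $\mat y\in\mathcal Y$ with weight $(h/L)^d$ is the discrete Fourier transform, and it is exact for trigonometric polynomials on $\Xi$. It is also why $\ten M$ may be arbitrary: it enters only through its values on $\Xi$. Beyond this the argument is elementary.
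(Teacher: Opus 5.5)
Your proof is correct and follows the paper's argument. You write the Kohn--Nirenberg symbol as $\ten M(\bm\xi)\cdot 1+\sum_{k}c_k(\mat x)(i\bm\xi)^{\alpha_k}$, a sum of $K+2$ separated products, and sampling it on $\mathcal Y\times\Xi$ gives a sum of $K+2$ rank-one matrices; your plane-wave computation in Step~1 just justifies the product rule $c(\mat x)\partial^\alpha\mapsto c(\mat x)(i\bm\xi)^\alpha$, which the paper states without derivation.
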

The symbol is $\ten M(\bm\xi)\cdot1+\sum_k c_k(\mat x)(i\bm\xi)^{\alpha_k}$, a sum of $K+2$ outer products (Text~S1). All operators of the KANO benchmarks are of this form with rank 2: $\ten G_1$ has the symbol $x^2+\xi^2$, $\ten G_2$ and $\ten G_3$ have $ix\xi-\xi^2$, and the quantum Hamiltonians have $w(x)+\frac12\xi^2$. The singular values of these symbols on the benchmark grid vanish beyond the second ($\sigma_3/\sigma_1<10^{-15}$), and so do those of the symbols that KANO learns with its official checkpoints ($\sigma_3/\sigma_1\le3.5\times10^{-6}$): KANO evaluates a rank-2 object at $m^2=16{,}384$ points.

\subsection{Cost of the symbol calculus}
\begin{proposition}[Exact low-rank quantization]\label{prop:cost}
For $\boldsymbol p(\mat x,\bm\xi)=\ell(\bm\xi)+\sum_{r=1}^Ra_r(\mat x)m_r(\bm\xi)$,
\begin{equation}\label{eq:lowrank}
\Op(\boldsymbol p)\,\mat a\;=\;\ten F_m^{-1}\bigl[\ell\,\ten F_m\mat a\bigr]+\sum_{r=1}^R a_r(\mat x)\cdot\ten F_m^{-1}\bigl[m_r\,\ten F_m\mat a\bigr]
\end{equation}
holds exactly, where $\ten F_m$ is the discrete Fourier transform. Its evaluation costs $\mathcal O\bigl((R+1)\,m^d\log m^d\bigr)$ operations and $\mathcal O\bigl((R+1)\,m^d\bigr)$ symbol evaluations, whereas Eq.~\ref{eq:kn} with a general symbol requires $m^{2d}$ symbol evaluations and $\mathcal O(m^{2d})$ operations per input, or $\mathcal O(m^{3d})$ in the reference implementation of KANO.
\end{proposition}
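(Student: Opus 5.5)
The plan is to substitute the low-rank symbol into the Kohn--Nirenberg sum, Eq.~\ref{eq:kn}, and use linearity in $\boldsymbol p$. Each term is then a composition of a discrete Fourier transform, a diagonal multiplication and an inverse transform. First I fix the DFT conventions on the grid $\mathcal Y$ and the mode set $\Xi$:
\[
(\ten F_m\mat a)(\bm\xi)=\sum_{\mat y\in\mathcal Y}e^{-i\mat y\cdot\bm\xi}\mat a(\mat y),\qquad
(\ten F_m^{-1}\hat{\mat a})(\mat x)=\Bigl(\frac hL\Bigr)^d\sum_{\bm\xi\in\Xi}e^{i\mat x\cdot\bm\xi}\hat{\mat a}(\bm\xi).
\]
The factor $(h/L)^d=m^{-d}$ is the usual inverse normalization when $\Xi$ is the full set of $m^d$ modes. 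If $\Xi$ is truncated, $\ten F_m^{-1}$ is understood as the inverse transform restricted to $\Xi$, with the symbols extended by zero outside $\Xi$.

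\textbf{Step 1 (factorizing the phase).} I write $e^{i(\mat x-\mat y)\cdot\bm\xi}=e^{i\mat x\cdot\bm\xi}e^{-i\mat y\cdot\bm\xi}$. For any symbol $\boldsymbol p$ I exchange the finite sums, so the inner $\mat y$-sum becomes exactly $(\ten F_m\mat a)(\bm\xi)$. This gives
\[
\Op(\boldsymbol p)\mat a(\mat x)=\Bigl(\frac hL\Bigr)^d\sum_{\bm\xi\in\Xi}e^{i\mat x\cdot\bm\xi}\boldsymbol p(\mat x,\bm\xi)(\ten F_m\mat a)(\bm\xi).
\]
\textbf{Step 2 (the two kinds of terms).} For a pure multiplier $\boldsymbol p=\ell(\bm\xi)$, the right-hand side is by definition $\ten F_m^{-1}[\ell\,\ten F_m\mat a](\mat x)$. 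For a separable term $a_r(\mat x)m_r(\bm\xi)$, the factor $a_r(\mat x)$ does not depend on the summation variable $\bm\xi$, so it comes out of the sum, leaving $a_r(\mat x)\,\ten F_m^{-1}[m_r\ten F_m\mat a](\mat x)$. Since $\Op$ is linear in $\boldsymbol p$, summing over the terms gives Eq.~\ref{eq:lowrank} exactly at every grid point. No approximation is made, because all sums are finite.

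\textbf{Step 3 (counting).} The transform $\ten F_m\mat a$ is computed once by an FFT in $\mathcal O(m^d\log m^d)$ operations. Each of the $R+1$ branches then needs:
\begin{itemize}
\item $|\Xi|\le m^d$ evaluations of $\ell$ or $m_r$ and a diagonal multiplication;
\item one inverse FFT;
\item for the separable terms, $m^d$ evaluations of $a_r$ and a pointwise product.
\end{itemize}
This totals $\mathcal O((R+1)m^d\log m^d)$ operations and $\mathcal O((R+1)m^d)$ symbol evaluations. For a general symbol, Step 1 still applies, but $\boldsymbol p(\mat x,\bm\xi)$ must be evaluated at all $m^d\cdot m^d$ pairs. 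The $\bm\xi$-sum is then a dense matrix--vector product costing $\mathcal O(m^{2d})$.

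The reference KANO implementation does not factor the phase. It forms $e^{i(\mat x-\mat y)\cdot\bm\xi}$ over all triples $(\mat x,\mat y,\bm\xi)$, which requires $m^{3d}$ entries and as many multiply--adds. I will state this as a description of that implementation rather than derive it.

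\textbf{Main obstacle.} The only delicate point is making the normalization and the mode set consistent. The constant $(h/L)^d$ must match the inverse DFT, and the identity has to hold when $\Xi$ is a truncated set of modes or when the frequencies are scaled by $2\pi/L$. I would handle this by defining $\ten F_m^{-1}$ directly as the operator in the display above. With that definition, exactness holds tautologically for any $\Xi$, and the FFT cost claim needs only zero padding.
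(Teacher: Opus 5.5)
Your proposal is correct and follows the paper's proof in Text~S1 essentially step for step. Both factor the phase $e^{i(\mat x-\mat y)\cdot\bm\xi}$ and reduce by linearity in $\boldsymbol p$ to a single product $a(\mat x)m(\bm\xi)$, pull $a(\mat x)$ out of the $\bm\xi$-sum to get $a(\mat x)\cdot\ten F_m^{-1}[m\,\ten F_m\mat a]$, count transforms and symbol evaluations, and treat the $\mathcal O(m^{3d})$ figure as a description of the reference implementation. Sharing one forward FFT across the $R+1$ branches and fixing the normalization $(h/L)^d=m^{-d}$ and the truncated mode set are minor refinements, not a different route. Your zero-padding remark presumes $\Xi$ lies on the DFT frequency lattice, which is implicit in $\ten F_m$ being the discrete Fourier transform.
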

In the Kohn--Nirenberg quantization the spatial argument of the symbol is applied after the inverse transform, so a product $a(\mat x)m(\bm\xi)$ is quantized to $a(\mat x)\cdot\ten F_m^{-1}[m\,\ten F_m\mat a]$ (Text~S1). During training the symbol must be re-evaluated at every step, so the $m^{2d}$ symbol evaluations dominate the cost of KANO at high resolution.

\subsection{Generators and propagators}
For an evolution $\partial_t\mat u=\ten G\mat u$, a static operator network must represent the propagator $e^{\Delta t\,\ten G}$, to which Proposition~\ref{prop:rank} does not apply. For the transport equation $u_t=c(x)u_x$, whose generator has rank 1, the propagator is the composition $f\mapsto f\circ\varphi_{\Delta t}$ with symbol $e^{i\xi(\varphi_{\Delta t}(x)-x)}$, whose rank grows with the displacement and the bandwidth: for $\varphi(x)=x+\varepsilon\sin x$ on the benchmark grid, the numerical rank at tolerance $10^{-4}$ is 11, 16 and 20 for $\varepsilon=0.1$, 0.2 and 0.3. Learning the generator keeps the representation in the low-rank class of Proposition~\ref{prop:rank} and delegates time evolution to an integrator whose stiff linear part is exact.

\subsection{Kolmogorov--Arnold flow architecture}
KAFlow represents the target operator by its generator (Fig.~\ref{fig:1}C):
\begin{equation}\label{eq:kaflow}
\ten G_\theta\mat u\;=\;\Op(\ell_\theta)\,\mat u\;+\;\sum_{r=1}^{R}a_{\theta,r}(\mat x)\cdot\Op(m_{\theta,r})\,\mat u\;+\;\bm\Phi^{\text{nl}}_\theta\bigl(\mat u,\;\Op(\mu_\theta)\mat u,\;\mat x\bigr),
\end{equation}
applied by Eq.~\ref{eq:lowrank}. For a static operator, $\ten G_\theta$ is the network output; for dynamics, $\partial_t\mat u=\ten G_\theta\mat u$, and the network output is the flow $\mat u(t+\Delta t)$.

\textit{Univariate Kolmogorov--Arnold components.} Every component of Eq.~\ref{eq:kaflow} is a univariate learnable function. The spectral symbols $\ell$, $m_r$ and $\mu$ are dictionary-augmented KAN edges,
\begin{equation}\label{eq:dict}
\ell(\xi)=s\Bigl[\sum_{p\in\mathcal P}c_p\Bigl(\frac{|\xi|}{\xi_{\text{ref}}}\Bigr)^{p}+\sum_{i}r_iB_i(\xi)\Bigr],\qquad \mathcal P=\{0,\tfrac12,1,\tfrac32,\dots,4\},
\end{equation}
with complex coefficients on the half spectrum $\xi\ge0$, so that the dictionary contains differential ($p\in\mathbb N$), fractional and Hilbert-type ($p\notin\mathbb N$, or a complex coefficient) multipliers, and the B-spline residual $r$ captures non-polynomial symbols. The residual is penalized with weights $w(\xi)=(E(\xi)/E_{\max}+\tau)^{-1}$, where $E$ is the training energy spectrum, so that it is only used where the data constrain it. The position functions $a_r(\mat x)$ are linear-plus-B-spline functions with zero mean, and the scales $s$ and $\xi_{\text{ref}}$ are fixed from data statistics (Text~S2). The nonlinearity is a separable KAN, $\bm\Phi(\mat z)=\sum_j\psi_j(z_j)+\sum_hw_h\prod_j\phi_{hj}(z_j)$, which contains products such as $u\,u_x$ and $x\,f$ as single terms.

\textit{Gauge fixing.} Eq.~\ref{eq:kaflow} admits redundant representations: a linear term can be carried by $\ell$, by $a_rm_r$ or by $\bm\Phi$. We remove this redundancy by construction. The position functions have zero mean, so constant coefficients belong to $\ell$, and the nonlinearity is projected onto its purely nonlinear part,
\begin{equation}\label{eq:gauge}
\bm\Phi^{\text{nl}}(\mat z,\mat x)=\bm\Phi(\mat z,\mat x)-\bm\Phi(\mat 0,\mat x)-\nabla_{\mat z}\bm\Phi(\mat 0,\mat x)\cdot\mat z,
\end{equation}
so that every linear term, including position-dependent ones, is carried by the symbol. Each term of the operator then has a unique place in the architecture, which makes automatic distillation possible.

\textit{Relation to PDNO.} Both PDNO~\cite{pdno} and KAFlow use separable symbols, but for different objects. PDNO applies a rank-1 symbol to lifted latent channels, so its factors are not the symbol of the physical operator. KAFlow acts on the physical field without lifting or projection, so that Eq.~\ref{eq:kaflow} is the symbol of the learned operator; it combines a translation-invariant multiplier with $R$ separable terms, which is exactly the class of Proposition~\ref{prop:rank}; its spectral factors contain differential, fractional and Hilbert-type multipliers by construction; and it learns a generator that is integrated in time rather than a static map.

\textit{Continuous-time flow.} For dynamics, KAFlow integrates $\partial_t\hat{\mat u}=\ell\,\hat{\mat u}+\ten N(\hat{\mat u})$ with the fourth-order Lawson integrating-factor Runge--Kutta scheme~\cite{lawson,kassam}, in which the stiff linear part $e^{\Delta t\,\ell}$ is exact, with $2/3$ dealiasing. For quantum dynamics, the generator is the Hamiltonian $\ten H=T(\xi)+v(x)+\kappa_1\rho+\kappa_2\rho^2$ with $\rho=|\psi|^2$, integrated by Strang splitting~\cite{strang}, $\psi\leftarrow e^{-\frac i2\Delta tV}\ten F^{-1}e^{-i\Delta tT}\ten F e^{-\frac i2\Delta tV}\psi$, which is unitary for every parameter value. The kinetic symbol $T(\xi)=s\sum_p\mathrm{softplus}(\theta_p)(|\xi|/\xi_{\text{ref}})^p$ is nonnegative, monotone and zero at $\xi=0$, which fixes the global-phase gauge. By Proposition~\ref{prop:cost}, a KAFlow evaluation costs $\mathcal O\bigl((R+J+1)\,m^d\log m^d\bigr)$ operations with $J$ nonlocal feature channels, and its parameters do not grow with $m$.

\subsection{Automatic symbolic distillation}
Because every component is a univariate function in a fixed gauge, the closed form of a trained KAFlow is obtained without human intervention. The learned position-dependent part $\sum_ra_r(\mat x)m_r(\bm\xi)$ is regressed on the monomials $\{(x^n-\langle x^n\rangle)(i\xi)^p\}$, the spectral symbol $\ell$ on the dictionary $\mathcal P$, and the nonlinearity on the products $\{f^2,f^3,ff_x,f^2f_x,ff_{xx},f_x^2,xf^2,xf^3\}$. Candidate supports are collected along a Bayesian information criterion path~\cite{bic} and from best-subset selection of each size, discarding ill-conditioned subsets (condition number above $10^3$). Every candidate is retrained in closed form, and the smallest support whose held-out loss on training-family data is within 1.1 times that of the best candidate is selected, the dense model being a candidate as well; the selected coefficients are refined by double-precision limited-memory Broyden--Fletcher--Goldfarb--Shanno (L-BFGS) optimization~\cite{lbfgs}. No symbolic template is specified by hand.

\subsection{Benchmarks, baselines and training}
\textit{Synthetic operators.} We use the official KANO datasets for $\ten G_1f=x^2f-f_{xx}$, $\ten G_2f=xf_x+f_{xx}$ and $\ten G_3f=f^3+xf_x+f_{xx}$ on 128 points: 2,000 training pairs from group A (periodic, chirped cosine and sine-beat inputs) and 400 test pairs from the unseen group B (sinc pulse, Gaussian--Hermite and wave-packet inputs). The metric is the mean relative $\ell_2$ loss on each group; the interpolation test evaluates inputs $f_\alpha=(1-\alpha)f_A+\alpha f_B$ with the ground-truth operator applied to $f_\alpha$. KAFlow uses $R=2$ and $J=1$ for all 3 operators; 3 random initializations are trained for 1,000 Adam~\cite{adam} iterations, the one with the lowest held-out group A loss is trained further and refined by full-batch L-BFGS.

\textit{Quantum dynamics.} We use the official double-well (DW) and nonlinear Schr\"odinger equation (NLSE) datasets of KANO: $i\partial_t\psi=-\frac12\psi_{xx}+w(x)\psi$ and $i\partial_t\psi=-\frac12\psi_{xx}+w(x)\psi+|\psi|^2\psi$ with $w(x)=x^4-(x-\frac1{32})^2+0.296$, 200 initial states and 100 snapshots every 0.1~ms, of which the first 10 are used for training. The supervision types are full wave functions, position and momentum probability mass functions (pos~\&~mom) and position probability mass functions only (pos), with the losses of KANO. The metric is the state infidelity $1-|\langle\tilde\psi,\psi\rangle|$ over all 100 steps.

\textit{Variable-order fractional operator.} We define $\ten G_4f=\Op(|\xi|^{1+0.5\sin x})f$, whose order varies between $\frac12$ and $\frac32$ in space and whose numerical separation rank is 4 at tolerance $10^{-4}$. Inputs are drawn from the official KANO generator with the same groups, and outputs are computed with the exact Kohn--Nirenberg quantization. KANO, KANO with a multilayer perceptron (MLP) symbol (the first stage of its official recipe) and FNO are trained with the official recipes of KANO.

\textit{Nonlocal and dispersive dynamics.} We generate 256 training and 64 test trajectories of 100 steps from random initial conditions on 128 points for the Burgers equation $u_t=0.02u_{xx}-(\frac12u^2)_x$, the fractional Burgers equation $u_t=-0.05(-\Delta)^{3/4}u-(\frac12u^2)_x$, the Benjamin--Ono equation $u_t=-\ten Hu_{xx}-(\frac12u^2)_x$ with the Hilbert transform $\ten H$, the Korteweg--de Vries equation $u_t=-u_{xxx}-3(u^2)_x$~\cite{kdv} and the Whitham equation $u_t=-\partial_x\ten Ku-\frac34(u^2)_x$ with $\widehat{\ten K}(\xi)=\sqrt{\tanh\xi/\xi}$. The Korteweg--de Vries and Whitham equations were held out during the design of KAFlow, with the training and model-selection rules fixed beforehand. KAFlow is trained with a rollout curriculum and selects its sparse support on the full-horizon validation rollout. FNO is tuned on the same validation trajectories, and PDE-FIND~\cite{pdefind} and Weak SINDy~\cite{weaksindy} (PySINDy~\cite{pysindy}) use a library of $\{u,u^2,u^3\}\times\{\partial_x^0,\dots,\partial_x^4\}$ with thresholds selected by validation rollouts. All details are given in Text~S2.

\subsection{Statistical analysis}
KAFlow results are means over 6 random seeds for the synthetic operators and evolution equations and over 3 seeds for the quantum benchmarks and $\ten G_4$; baselines trained here use 3 seeds. Uncertainties are standard deviations over seeds. Baseline numbers for U-FNO, AM-FNO, PDNO, KANO and its variants on the KANO benchmarks are those reported by KANO~\cite{kano}; we reproduced the FNO, KANO and KANO-symbolic results with the official checkpoints. Coefficients of closed forms are reported for the model with the lowest validation loss, and the number of seeds that select the same support is stated.

\section{RESULTS}
\subsection{Generalization on position-dependent synthetic operators}
KAFlow attains the lowest unseen-family loss among all neural models on every synthetic operator (Table~\ref{tab:loss} and Fig.~\ref{fig:2}A to C), 3.4, 11 and 7.3 times lower than KANO and 2 to 3 orders of magnitude lower than the FNO family. Its losses on the unseen families are even lower than on the training families (B/A $<1$), and its loss ratio stays below 1 along the whole interpolation path from group A to group B, whereas that of FNO increases by up to 14.5 times (Fig.~\ref{fig:2}D to F): KAFlow converges to the operator itself rather than to an in-sample mapping. Removing the separable position-dependent terms and letting the nonlinearity $\bm\Phi(\cdot,\mat x)$ carry the position dependence, with the same training protocol, raises the loss to the level of KANO, 3.6, 21 and 19 times above that of the full model (Table~\ref{tab:loss}, KAFlow without separable terms): the low-rank symbol is the component that lowers the loss below KANO. Against PDNO, which uses a single separable symbol on latent channels, the unseen-family loss of KAFlow is 15 to 270 times lower.

\begin{table}[tbp]
\centering
\caption{\textbf{Relative $\ell_2$ losses ($\times10^{-4}$) and parameter counts on the synthetic operators.} A, training families; B, unseen families. Baselines as reported by KANO; FNO, KANO and KANO-symbolic were reproduced with the official checkpoints. KAFlow: mean of 6 seeds. The last row is the exact operator evaluated against the stored single-precision labels.}
\label{tab:loss}
\small
\setlength{\tabcolsep}{4pt}
\resizebox{\textwidth}{!}{%
\begin{tabular}{l ccc ccc ccc}
\toprule
& \multicolumn{3}{c}{$\ten G_1$} & \multicolumn{3}{c}{$\ten G_2$} & \multicolumn{3}{c}{$\ten G_3$}\\
\cmidrule(lr){2-4}\cmidrule(lr){5-7}\cmidrule(lr){8-10}
Model (parameters) & A & B & B/A & A & B & B/A & A & B & B/A\\
\midrule
FNO (566k) & 6.36 & 98.8 & 15.53 & 10.6 & 87.0 & 8.21 & 11.4 & 81.4 & 7.14\\
U-FNO (579k) & 2.79 & 22.9 & 8.21 & 8.14 & 339 & 41.65 & 92.4 & 292 & 3.16\\
AM-FNO (548k) & 1.08 & 20.9 & 19.35 & 1.20 & 16.5 & 13.75 & 1.16 & 29.8 & 25.69\\
PDNO (538k) & 1.41 & 6.31 & 4.5 & 1.92 & 12.1 & 6.3 & 4.03 & 27.2 & 6.7\\
KANO (152) & 1.04 & 1.44 & 1.38 & 0.629 & 0.749 & 1.19 & 0.716 & 0.737 & 1.03\\
KANO, MLP symbol (2k) & 3.37 & 6.59 & 1.96 & 4.49 & 8.07 & 1.80 & 3.59 & 6.87 & 1.91\\
KANO-symbolic & 0.512 & 0.526 & 1.03 & 0.498 & 0.500 & 1.00 & 0.520 & 0.536 & 1.03\\
\midrule
KAFlow without separable terms (216) & 1.48 & 1.53 & 1.03 & 1.88 & 1.38 & 0.74 & 2.28 & 1.96 & 0.86\\
KAFlow (336) & 0.632 & \textbf{0.422} & 0.67 & 0.119 & \textbf{0.067} & 0.56 & 0.168 & \textbf{0.101} & 0.60\\
KAFlow-symbolic & 0.0075 & \textbf{0.0125} & 1.66 & 0.0070 & \textbf{0.0115} & 1.63 & 0.0086 & \textbf{0.0133} & 1.55\\
\midrule
Exact operator & 0.0081 & 0.0129 & 1.60 & 0.0076 & 0.0119 & 1.57 & 0.0083 & 0.0128 & 1.54\\
\bottomrule
\end{tabular}}
\end{table}

\begin{figure}[tbp]
\centering
\includegraphics[width=\textwidth]{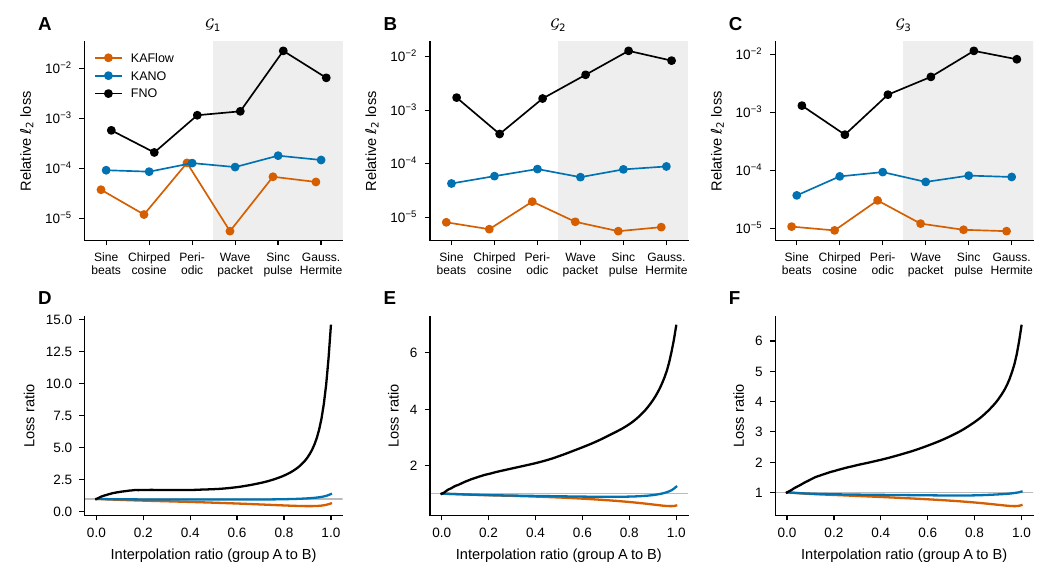}
\caption{\textbf{Generalization on the synthetic operators.} (A to C) Relative $\ell_2$ loss per input family for $\ten G_1$, $\ten G_2$ and $\ten G_3$; the shaded families (group B) are unseen during training. KANO and FNO use the official checkpoints; KAFlow is the mean of 6 seeds. (D to F) Loss ratio along the interpolation from group A to group B inputs, relative to the loss on group A.}
\label{fig:2}
\end{figure}

\subsection{Automatic recovery of closed-form operators}
The learned components of KAFlow reproduce each term of the operators in its own place of the architecture, as Eq.~\ref{eq:gauge} enforces (Fig.~\ref{fig:3}): the position profile $x^2$, the dissipative symbols $\pm\xi^2$, the position-dependent advection $x\,\xi$ and the cubic nonlinearity, whose least-squares fit on unseen inputs is $1.000f^3-0.000f$. The automatic distillation selects the exact support for all 18 trained models (6 seeds $\times$ 3 operators) and removes the neural nonlinearity for $\ten G_1$ and $\ten G_2$. The resulting KAFlow-symbolic recovers every coefficient to within $10^{-5}$ for $\ten G_1$ and $\ten G_2$ and to within $8.4\times10^{-5}$ for $\ten G_3$, with all other coefficients below $10^{-5}$ (Table~\ref{tab:evidence}). Its losses coincide with those of the exact operator evaluated against the single-precision labels (Table~\ref{tab:loss}, last row), so KAFlow-symbolic reaches the precision floor of the benchmark. The KANO-symbolic model is obtained by retraining human-specified templates and retains spurious $f^2$, $f$ and constant terms (Table~\ref{tab:evidence}).

\begin{figure}[tbp]
\centering
\includegraphics[width=\textwidth]{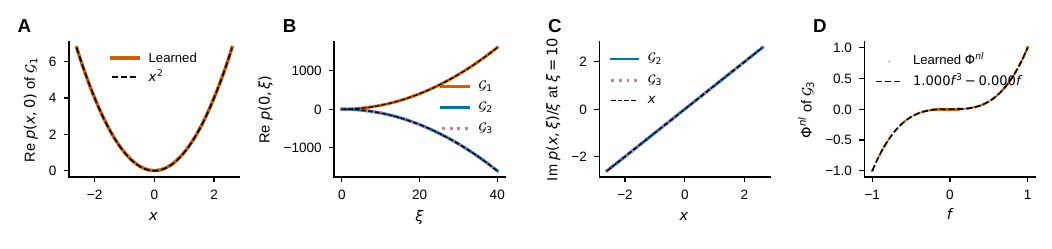}
\caption{\textbf{Learned components of KAFlow} (solid) against the ground truth (dashed). (A) $\mathrm{Re}\,\boldsymbol p(x,0)$ of $\ten G_1$. (B) $\mathrm{Re}\,\boldsymbol p(0,\xi)$ of $\ten G_1$, $\ten G_2$ and $\ten G_3$. (C) $\mathrm{Im}\,\boldsymbol p(x,\xi)/\xi$ of $\ten G_2$ and $\ten G_3$ at $\xi=10$. (D) Gauge-fixed nonlinearity $\bm\Phi^{\text{nl}}$ of $\ten G_3$ on unseen inputs. Positions are shown where the input envelope equals 1.}
\label{fig:3}
\end{figure}

\begin{table}[tbp]
\centering
\caption{\textbf{Ground-truth and learned operators} (coefficients rounded to the fourth decimal place). KANO: as reported, from human-designed templates. KAFlow: automatic distillation, seed 0; all 6 seeds select the same support.}
\label{tab:evidence}
\small
\resizebox{\textwidth}{!}{%
\begin{tabular}{@{}l l l@{}}
\toprule
Ground truth & Learned KANO operator & Learned KAFlow operator (automatic)\\
\midrule
$x^2 f-\partial_{xx}f$ & $(x^2+0.0003) f-\partial_{xx}f$ & $1.0000\,x^2 f-1.0000\,\partial_{xx}f$\\
$x\partial_xf+\partial_{xx}f$ & $0.9996\,x\partial_xf+\partial_{xx}f-0.0003\,f$ & $1.0000\,x\partial_xf+1.0000\,\partial_{xx}f$\\
$f^3+x\partial_xf+\partial_{xx}f$ & \makecell[l]{$1.0001\,f^3+0.99997\,x\partial_xf+0.99997\,\partial_{xx}f$\\ $\;-0.0002\,f^2-0.0003\,f-0.0001$} & $1.0000\,f^3+1.0000\,x\partial_xf+1.0000\,\partial_{xx}f$\\
\bottomrule
\end{tabular}}
\end{table}

\subsection{Long-horizon quantum dynamics}
With full supervision, KAFlow reaches state infidelities of $2.3\times10^{-10}$ on DW and $7.4\times10^{-12}$ on NLSE after 90 extrapolated steps, 4 to 6 orders of magnitude below Q-KANO, the quantum variant of KANO, and 87 and 2,700 times below its symbolic version (Table~\ref{tab:fidelity}). With the physically attainable pos~\&~mom supervision, KAFlow reaches $1.9\times10^{-7}$ and $1.8\times10^{-7}$, about 30 times lower than Q-KANO, and with pos supervision alone it reaches $1.6\times10^{-4}$ and $1.7\times10^{-4}$, 29 and 350 times lower than Q-KANO. The infidelity grows slowly over the 90 extrapolated steps (Fig.~\ref{fig:4}), as the unitary Strang flow of the learned generator is iterated with 25 (DW) and 27 (NLSE) parameters.

\begin{table}[tbp]
\centering
\caption{\textbf{State infidelity after 90 additional time-evolution steps.} Baselines as reported by KANO. KAFlow: mean of 3 seeds.}
\label{tab:fidelity}
\small
\begin{tabular}{lcc}
\toprule
Model and supervision & DW & NLSE\\
\midrule
FNO (full) & $1.5\times10^{-2}$ & $1.6\times10^{-2}$\\
Q-KANO (full) & $6.3\times10^{-6}$ & $6.8\times10^{-6}$\\
Q-KANO (pos \& mom) & $6.3\times10^{-6}$ & $5.6\times10^{-6}$\\
Q-KANO (pos) & $4.7\times10^{-3}$ & $6.1\times10^{-2}$\\
Q-KANO, MLP symbol (pos \& mom) & $7.7\times10^{-6}$ & $8.5\times10^{-6}$\\
Q-KANO-symbolic (full) & $2.0\times10^{-8}$ & $2.0\times10^{-8}$\\
Q-KANO-symbolic (pos \& mom) & $2.0\times10^{-8}$ & $3.0\times10^{-8}$\\
\midrule
KAFlow (full) & $\mathbf{2.3\times10^{-10}}$ & $\mathbf{7.4\times10^{-12}}$\\
KAFlow (pos \& mom) & $1.9\times10^{-7}$ & $1.8\times10^{-7}$\\
KAFlow (pos) & $1.6\times10^{-4}$ & $1.7\times10^{-4}$\\
\bottomrule
\end{tabular}
\end{table}

\begin{figure}[tbp]
\centering
\includegraphics[width=\textwidth]{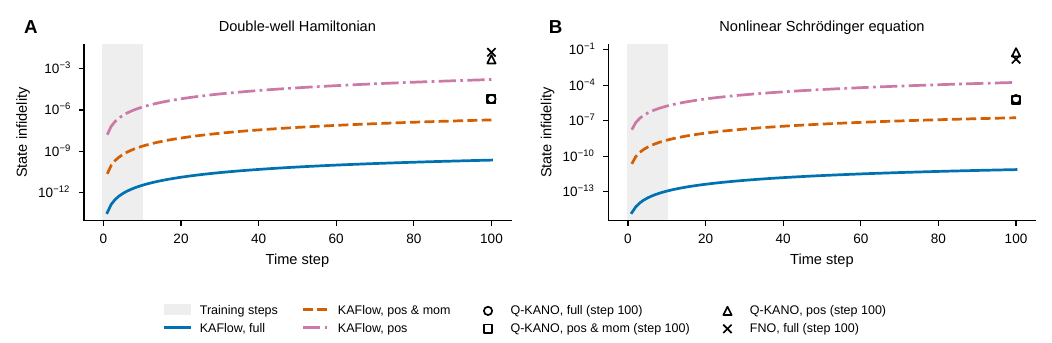}
\caption{\textbf{State infidelity of long-horizon quantum dynamics} over 100 time steps (mean of 3 seeds). (A) Double-well Hamiltonian. (B) Nonlinear Schr\"odinger equation. The first 10 steps (shaded) are used for training. Open markers and crosses show the step-100 infidelities reported for Q-KANO and FNO.}
\label{fig:4}
\end{figure}

The learned Hamiltonians are compared in Table~\ref{tab:hamiltonian}. With full supervision, KAFlow recovers the quartic, quadratic and constant coefficients of the potential and the kinetic coefficient to the fourth decimal place, and the distilled NLSE model recovers the Kerr coefficient as 1.0000, compared to 0.9815 and a spurious $0.0110\,|\psi|$ term for Q-KANO. With pos~\&~mom supervision, where the Kerr coefficient enters only through higher-order correlations of the probability mass functions, the distilled KAFlow recovers $\kappa_1=0.959\pm0.001$ (3 seeds), compared to 0.9514 for Q-KANO.

\begin{table}[tbp]
\centering
\caption{\textbf{Ground-truth and learned Hamiltonian symbols with full supervision} (coefficients rounded to the fourth decimal place).}
\label{tab:hamiltonian}
\small
\resizebox{\textwidth}{!}{%
\begin{tabular}{@{}l l l@{}}
\toprule
Hamiltonian & Model & Learned symbolic structure\\
\midrule
DW & Ground truth & $x^4-x^2+0.0625\,x+0.2950+0.5\,\xi^2$\\
 & Q-KANO & $1.0004\,x^4+0.0001\,x^3-1.0013\,x^2+0.0625\,x+0.2955+0.5\,\xi^2$\\
 & KAFlow & $1.0000\,x^4-0.0001\,x^3-1.0000\,x^2+0.0627\,x+0.2950+0.5\,\xi^2$\\
\midrule
NLSE & Ground truth & $x^4-x^2+0.0625\,x+0.2950+0.5\,\xi^2+|\psi|^2$\\
 & Q-KANO & $1.0005\,x^4-0.0001\,x^3-1.0014\,x^2+0.0626\,x+0.2942+0.5\,\xi^2+0.9815\,|\psi|^2+0.0110\,|\psi|$\\
 & KAFlow & $1.0000\,x^4-1.0000\,x^2+0.0625\,x+0.2950+0.5\,\xi^2+1.0000\,|\psi|^2$\\
\bottomrule
\end{tabular}}
\end{table}

\subsection{Variable-order fractional operator}
Beyond the polynomial symbols of the KANO benchmarks, KAFlow learns the variable-order fractional operator $\ten G_4$, whose symbol is neither polynomial in $x$ nor in $\xi$. With $R=4$ separable terms, KAFlow attains $2.9\times10^{-3}$ on the unseen families, 17 times lower than KANO with an MLP symbol and 15 times lower than FNO, with 0.08\% of the FNO parameters (Table~\ref{tab:h1} and Fig.~\ref{fig:5}). The learned symbol reproduces the spatially varying order across the energetic band (Fig.~\ref{fig:5}A and B). The published KANO configuration, whose symbol KAN has an additive real part, cannot represent this symbol, and KANO with an MLP symbol generalizes 9 times worse than it fits the training families. Four separable terms suffice, consistent with the numerical rank of the symbol.

\begin{table}[tbp]
\centering
\caption{\textbf{Relative $\ell_2$ losses ($\times10^{-2}$) on the variable-order fractional operator $\ten G_4$} (mean of 3 seeds).}
\label{tab:h1}
\small
\begin{tabular}{lccc}
\toprule
Model (parameters) & A & B & B/A\\
\midrule
FNO (566k) & 0.085 & 4.44 & 52.2\\
KANO, published configuration & 51.6 & 206 & 3.99\\
KANO, MLP symbol (1.2k) & 0.557 & 5.06 & 9.08\\
KAFlow, $R=2$ (336) & 0.193 & 0.886 & 4.59\\
KAFlow, $R=4$ (456) & 0.124 & \textbf{0.293} & 2.36\\
\bottomrule
\end{tabular}
\end{table}

\begin{figure}[tbp]
\centering
\includegraphics[width=\textwidth]{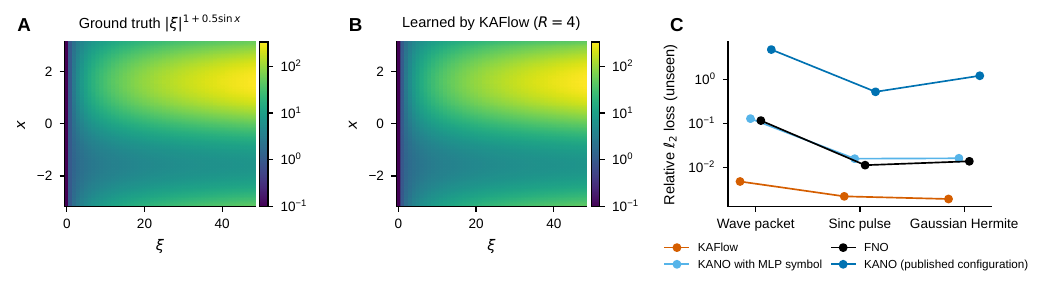}
\caption{\textbf{Variable-order fractional operator $\ten G_4$.} (A) Ground-truth symbol $|\xi|^{1+0.5\sin x}$. (B) Symbol learned by KAFlow with $R=4$. (C) Relative $\ell_2$ loss on the unseen families (mean of 3 seeds).}
\label{fig:5}
\end{figure}

\subsection{Nonlocal and dispersive dynamics}
KAFlow attains the lowest 100-step loss on all 5 evolution equations with 302 parameters, against 287,000 to 550,000 for FNO (Table~\ref{tab:dynamics} and Fig.~\ref{fig:6}A to E). The gap to FNO is 3.0 times on the Burgers equation, 3.3 times on the fractional Burgers equation and 4.3 times on the Whitham equation, whose full-dispersion relation is non-polynomial. Sparse regression succeeds on the local Burgers equation but fails on the nonlocal and fractional equations, whose linear parts lie outside any local polynomial library: KAFlow is 20 (fractional Burgers), 220 (Benjamin--Ono) and 7.4 (Whitham) times more accurate than the best sparse-regression method. Its error grows slowly with the rollout horizon, as dispersion and dissipation are integrated exactly.

The learned linear symbols are shown in Table~\ref{tab:dynamics} (bottom) and Fig.~\ref{fig:6}F to J. The automatic distillation selects the exact closed form for the Burgers, fractional Burgers, Benjamin--Ono and Korteweg--de Vries equations, recovering the fractional order $\frac32$, the Hilbert-type symbol $-i\xi|\xi|$ and the dispersion $i\xi^3$ with coefficient errors of 0.02\% to 0.6\%, including the held-out Korteweg--de Vries equation. For the Whitham equation, the model keeps the dense symbol and recovers $-\sqrt{\xi\tanh\xi}$ to 0.25\% over the resolved band.

\begin{table}[tbp]
\centering
\caption{\textbf{Nonlocal and dispersive dynamics.} Top, relative $\ell_2$ loss after 100 rollout steps on unseen initial conditions (FNO, 3 seeds; KAFlow, 6 seeds, for the Korteweg--de Vries and Benjamin--Ono equations the 5 that converged; mean $\pm$ standard deviation). Bottom, linear symbols learned by the KAFlow model with the lowest validation loss. BO, Benjamin--Ono; KdV, Korteweg--de Vries.}
\label{tab:dynamics}
\small
\resizebox{\textwidth}{!}{%
\begin{tabular}{lccccc}
\toprule
Model (parameters) & Burgers & Fractional Burgers & BO & KdV & Whitham\\
\midrule
PDE-FIND & $6.7\times10^{-3}$ & $9.6\times10^{-2}$ & $7.5\times10^{-1}$ & $9.2\times10^{-1}$ & diverged\\
Weak SINDy & $2.6\times10^{-2}$ & $1.0\times10^{-1}$ & $6.5\times10^{-1}$ & $5.9\times10^{-2}$ & $7.6\times10^{-1}$\\
FNO (287k to 550k) & $(1.30\pm0.03)\times10^{-2}$ & $(1.59\pm0.10)\times10^{-2}$ & $(3.12\pm0.36)\times10^{-3}$ & $(1.98\pm1.3)\times10^{-2}$ & $(4.42\pm0.24)\times10^{-1}$\\
KAFlow (302) & $\mathbf{(4.29\pm2.0)\times10^{-3}}$ & $\mathbf{(4.88\pm4.0)\times10^{-3}}$ & $\mathbf{(2.90\pm1.9)\times10^{-3}}$ & $\mathbf{(1.61\pm1.1)\times10^{-2}}$ & $\mathbf{(1.03\pm0.27)\times10^{-1}}$\\
\midrule
Ground-truth $\ell(\xi)$ & $-0.02\,\xi^2$ & $-0.05\,|\xi|^{3/2}$ & $-i\,\xi|\xi|$ & $i\,\xi^3$ & $-i\sqrt{\xi\tanh\xi}$\\
Learned $\ell(\xi)$ (automatic) & $-0.0201\,\xi^2$ & $-0.0503\,|\xi|^{3/2}$ & $-1.0002\,i\,\xi|\xi|$ & $0.9994\,i\,\xi^3$ & spline, relative error $2.5\times10^{-3}$\\
\bottomrule
\end{tabular}}
\end{table}

\begin{figure}[tbp]
\centering
\includegraphics[width=\textwidth]{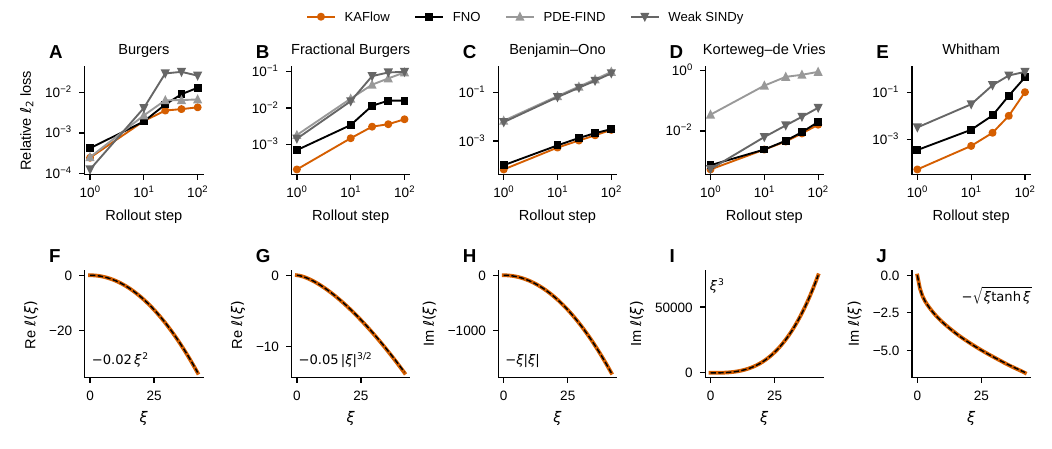}
\caption{\textbf{Nonlocal and dispersive dynamics.} (A to E) Relative $\ell_2$ loss on unseen initial conditions over 100 rollout steps; PDE-FIND diverges on the Whitham equation. (F to J) Linear symbols $\ell(\xi)$ learned by KAFlow (solid) against the ground truth (dashed, formula in each panel) over the resolved band; (F to I) are the closed forms selected automatically and (J) is the non-polynomial Whitham symbol represented by the spline residual.}
\label{fig:6}
\end{figure}

\subsection{Computational cost}
Proposition~\ref{prop:cost} predicts that the cost of a training step of KANO grows with the $m^2$ symbol evaluations and that of KAFlow with $m\log m$. At the benchmark resolution $m=128$, a training step (batch of 64, forward and backward pass) takes 10.1~ms for KANO and 8.5~ms for KAFlow on a central processing unit. When the resolution grows from 64 to 1,024, the forward pass with symbol evaluation grows from 3.4~ms to 3.6~s for KANO and from 2.1 to 5.2~ms for KAFlow, a factor of 690 at $m=1{,}024$ (Table~\ref{tab:cost}).

\begin{table}[tbp]
\centering
\caption{\textbf{Forward time (ms) with symbol evaluation} (batch of 16; Intel Xeon Silver 4510, 8 threads). KANO, $\mathcal O(m^2)$: the same quantization implemented as a Fourier transform followed by a dense position--frequency sum, which reproduces the official output to a relative difference of $6.8\times10^{-6}$.}
\label{tab:cost}
\small
\begin{tabular}{lccccc}
\toprule
Resolution $m$ & 64 & 128 & 256 & 512 & 1,024\\
\midrule
KANO, official implementation & 3.4 & 8.2 & 53.8 & 505 & --\\
KANO, $\mathcal O(m^2)$ implementation & 3.4 & 8.0 & 39.3 & 356 & 3,596\\
KAFlow & \textbf{2.1} & \textbf{2.5} & \textbf{2.9} & \textbf{3.7} & \textbf{5.2}\\
\bottomrule
\end{tabular}
\end{table}

\section{DISCUSSION}
Operator networks that are meant to stand in for physical laws need two things that are usually traded against each other: generalization outside the training distribution and a closed form that can be read out. KANO showed that a dual-domain symbol provides both for position-dependent dynamics. Our results show that the structure of the physical operators themselves allows a much more compact and more accurate representation. Position-dependent operators have low separation rank, and a symbol built from a translation-invariant multiplier and a few separable terms represents them exactly, applies them with fast Fourier transforms and leaves no room for redundant representations once the gauge of its univariate components is fixed. On every benchmark of KANO, this representation lowers the out-of-distribution loss, and the ablation identifies the separable terms as the component that does so.

Two design choices carry over to other scientific-modeling problems. First, learning the generator instead of the propagator keeps the representation in the low-rank class and moves the time evolution into an integrator whose stiff linear part is exact; this is what keeps the quantum infidelity at the $10^{-10}$ level over 90 extrapolated steps and the dispersive rollouts stable over 100 steps. Second, fixing the gauge of the components turns symbolic regression into an automatic step: every term has one place in the architecture, so a sparse fit with a parsimony rule recovers the exact closed form for all synthetic operators, the potential and Kerr coefficient of the Schr\"odinger equation, and fractional, Hilbert-type and dispersive symbols that lie outside the libraries of sparse regression. The same structure, a translation-invariant multiplier plus a few separable position-dependent terms, describes many operators of mathematical physics, including variable-coefficient diffusion, transport and Schr\"odinger operators. Extending the low-rank generator to two- and three-dimensional systems, where the dense symbol calculus is most costly, is a natural direction for future work.

\section*{Acknowledgments}
\subsection*{Author Contributions}
M. Cheon conceived the study, developed the method, ran the experiments, analyzed the results and wrote the manuscript.

\subsection*{Funding}
The author acknowledges that they did not receive funding for this work.

\subsection*{Conflicts of Interest}
The author declares that there is no conflict of interest regarding the publication of this article.

\subsection*{Data Availability}
The synthetic-operator and quantum datasets and checkpoints of KANO are available from the official KANO repository (\url{https://github.com/jinucsb/KANO}). The data generators for the variable-order fractional operator and the evolution equations, the trained KAFlow models and the code for training, distillation, baselines, analyses and figures are available from the corresponding author upon reasonable request.

\section*{Supplementary Materials}
Texts~S1 and S2\\
Table~S1

\bibliographystyle{vancouver}
\bibliography{main}

\clearpage
\setcounter{figure}{0}
\setcounter{table}{0}
\renewcommand{\thefigure}{S\arabic{figure}}
\renewcommand{\thetable}{S\arabic{table}}
\begin{center}
{\large\bfseries Supplementary Materials}
\end{center}

\subsection*{Text S1. Proofs}
\textit{Proof of Proposition~\ref{prop:rank}.} The symbol of $\partial^{\alpha}$ is $(i\bm\xi)^{\alpha}$ and that of multiplication by $c(\mat x)$ is $c(\mat x)$; in the Kohn--Nirenberg quantization the symbol of $c(\mat x)\partial^\alpha$ is the product $c(\mat x)(i\bm\xi)^\alpha$. Hence the symbol of $\ten M+\ten G$ is $\ten M(\bm\xi)\cdot1+\sum_{k=0}^Kc_k(\mat x)(i\bm\xi)^{\alpha_k}$, a sum of $K+2$ products of a function of $\mat x$ and a function of $\bm\xi$. Sampling on $\mathcal Y\times\Xi$ gives a sum of $K+2$ rank-1 matrices, whose rank is at most $K+2$ for every $m$.

\textit{Proof of Proposition~\ref{prop:cost}.} By Eq.~\ref{eq:kn} and linearity in $\boldsymbol p$, it suffices to consider $\boldsymbol p=a(\mat x)m(\bm\xi)$: $\Op(am)\,\mat u(\mat x)=a(\mat x)\,(h/L)^d\sum_{\bm\xi}e^{i\mat x\cdot\bm\xi}m(\bm\xi)\sum_{\mat y}e^{-i\mat y\cdot\bm\xi}\mat u(\mat y)=a(\mat x)\cdot\ten F_m^{-1}[m\,\ten F_m\mat u](\mat x)$. Each term costs one forward and one inverse fast Fourier transform, $\mathcal O(m^d\log m^d)$, one pointwise product and $m^d$ evaluations of $m$ and $a$. For a general symbol, the inner sum over $\mat y$ is a Fourier transform, but the outer sum couples every $\mat x$ with every $\bm\xi$ through $\boldsymbol p(\mat x,\bm\xi)$, which requires $m^{2d}$ symbol values and $\mathcal O(m^{2d})$ operations; forming the phase $e^{i(\mat x-\mat y)\cdot\bm\xi}$ for all triples, as in the reference implementation, costs $\mathcal O(m^{3d})$.

\textit{Exactness of the linear flow.} For $\partial_t\hat{\mat u}=\ell\,\hat{\mat u}+\ten N(\hat{\mat u})$ with diagonal $\ell$, the Lawson scheme applies the Runge--Kutta method to $\hat{\mat v}=e^{-t\ell}\hat{\mat u}$. If $\ten N\equiv0$, it returns $e^{\Delta t\,\ell}\hat{\mat u}$ exactly for every $\Delta t$, so dissipation and dispersion of any order, including fractional and non-polynomial symbols, introduce no time-discretization error and no step-size restriction. Each factor of the Strang step is unitary for real $V$ and $T$, so the quantum flow preserves the norm for every parameter value.

\subsection*{Text S2. Experiment details}
\textit{Common configuration.} All spectral symbols are dictionary-augmented functions (Eq.~\ref{eq:dict}) on the real-Fourier half spectrum, with 16 cubic B-spline intervals for the residual and $\tau=10^{-3}$. The scale $s$ is the ratio of output to input root-mean-square values for static operators and the relative rate of change $\mathrm{rms}(u_{t+1}-u_t)/(\Delta t\,\mathrm{rms}(u))$ for dynamics; $\xi_{\text{ref}}$ is the energy-weighted root-mean-square frequency of the training data. Position functions use 12 cubic B-spline intervals plus a linear term and are centered to zero mean. The nonlinearity is a separable KAN with 2 product terms and 8 B-spline intervals per factor. The penalties are an $\ell_1$ penalty on dictionary coefficients ($\lambda_1=10^{-5}$, three-fold for half-integer exponents) and the energy-weighted residual penalty ($\lambda_r=10^{-3}$); for dynamics both are multiplied by the one-step loss of the persistence predictor. Non-finite losses or gradients restore the last finite checkpoint and halve the learning rate.

\textit{Synthetic operators.} The official datasets have $N=128$ points on $(-\pi,\pi]$, 2,000 training pairs from group A, 400 held-out pairs from group A for model selection and 400 test pairs from group B. Three random initializations are trained for 1,000 Adam iterations (batch of 64, learning rate from $10^{-2}$ to $3\times10^{-3}$); the one with the lowest held-out group A loss is trained for 7,000 further iterations (cosine schedule from $5\times10^{-3}$); sparsification candidates are fine-tuned for 800 iterations each; the selected model is refined by full-batch L-BFGS (800 iterations). The interpolation test pairs the $i$-th held-out group A sample with the $i$-th group B sample for 101 values of $\alpha$. The ablation without separable terms uses the same protocol with the position dependence carried by $\bm\Phi(\cdot,\mat x)$.

\textit{Quantum dynamics.} The official datasets have $L=4$, $N=128$ and $\Delta T=0.1$~ms. KAFlow uses one Strang step per observation interval, a potential $v(x)$ with 12 cubic B-spline intervals plus linear and constant terms, the kinetic dictionary $\{\frac12,1,\dots,4\}$ with softplus coefficients and, for NLSE, the Kerr polynomial $\kappa_1\rho+\kappa_2\rho^2$. The losses are the relative $\ell_2$ loss on wave functions (full) and the Kullback--Leibler divergences of position and momentum probability mass functions (pos~\&~mom, pos), computed in double precision; 5,000 Adam iterations are used. The closed forms in Table~\ref{tab:hamiltonian} are obtained by polynomial fits of the learned potential (DW) and by the distillation of the main text with a profile-likelihood refinement of the Kerr coefficient (NLSE), for which the stored single-precision probability mass functions are renormalized in double precision.

\textit{Variable-order fractional operator.} Inputs are drawn from the official KANO generator with the same families and normalization (2,000 training and 400 held-out pairs from group A, 400 test pairs from group B). Outputs are computed in double precision with the Kohn--Nirenberg quantization on the full frequency grid; applying the same routine to the $\ten G_2$ symbol reproduces the official $\ten G_2$ labels to $2.8\times10^{-7}$. KANO follows the official two-stage recipe: a multilayer-perceptron symbol (hidden width 32, 2 layers) trained sample by sample with Adam (400 epochs, $10^{-3}$, plateau scheduler), distillation into a KAN symbol of width $[[2,0],[1,1]]$, and fine-tuning (100 epochs at $10^{-1}$ and 10 epochs at $10^{-5}$); KANO with an MLP symbol is the model after the first stage. FNO follows the official recipe (batch of 16, Adam $10^{-3}$, 1,000 epochs, plateau scheduler).

\textit{Nonlocal and dispersive dynamics.} Trajectories are generated with a pseudo-spectral Lawson fourth-order Runge--Kutta solver (20 sub-steps per observation, $2/3$ dealiasing) on 128 points with $L=2\pi$ and $\Delta t=0.02$ (Burgers, fractional Burgers) or 0.01 (Benjamin--Ono, Korteweg--de Vries, Whitham). Initial conditions are random Fourier series with wavenumbers 1 to 4, zero mean and unit standard deviation. Of the 256 training trajectories, the last 32 are used for validation. For these translation-invariant equations KAFlow uses $R=0$, 2 nonlocal feature channels and a learned flux multiplier $d(\xi)\in\mathrm{span}\{1,i\xi\}$, that is, $\ten N(\hat{\mat u})=d\cdot\ten F_m[\bm\Phi^{\text{nl}}(\mat u,\Op(\mu)\mat u)]$, with 4 Lawson sub-steps per observation interval. It trains 3 random initializations for 600 one-step iterations, selects the one with the lowest validation rollout loss over the full horizon, and continues with rollout lengths 4 and 16 (600 iterations each, batch of 32). FNO (width 32 or 64, 16 or 32 modes, learning rate $10^{-3}$ or $3\times10^{-3}$) is selected on the same validation trajectories and trained with the same rollout curriculum for 4,000 iterations. PDE-FIND and Weak SINDy use PySINDy 1.7.5; the strong-form library and $u_t$ are projected onto the resolved band of the data, and the selected equations are integrated with the same pseudo-spectral solver.

\begin{table}[h]
\centering
\caption{\textbf{Main symbols and notation.}}
\small
\begin{tabular}{p{0.32\textwidth} p{0.6\textwidth}}
\toprule
Symbol & Meaning\\
\midrule
$D=(-L/2,L/2)^d$ & Periodic spatial domain\\
$\mathcal Y$, $\Xi$, $m$ & Uniform grid, retained modes, resolution\\
$\ten F_m$, $\ten F_m^{-1}$ & Discrete Fourier transform and its inverse\\
$\boldsymbol p(\mat x,\bm\xi)$ & Pseudo-differential symbol\\
$\Op(\boldsymbol p)$ & Kohn--Nirenberg quantization (Eq.~\ref{eq:kn})\\
$R$ & Number of separable position-dependent terms\\
$\ell(\bm\xi)$ & Translation-invariant spectral symbol\\
$a_r(\mat x)$, $m_r(\bm\xi)$ & Position and frequency factors of the $r$-th separable term\\
$\mu(\bm\xi)$, $J$ & Symbols and number of the nonlocal feature channels\\
$\bm\Phi$, $\bm\Phi^{\text{nl}}$ & Separable KAN nonlinearity and its gauge-fixed part (Eq.~\ref{eq:gauge})\\
$\mathcal P$, $c_p$, $r_i$ & Exponent dictionary, dictionary and B-spline residual coefficients\\
$E(\xi)$, $\xi_{\text{ref}}$, $s$ & Training energy spectrum, energy-weighted root-mean-square frequency, symbol scale\\
$T(\xi)$, $v(x)$, $\kappa_{1,2}$ & Kinetic symbol, potential and Kerr coefficients of the quantum model\\
$\Delta t$ & Observation time step\\
\bottomrule
\end{tabular}
\end{table}

\end{document}